\theoremstyle{definition}
\newtheorem{definition}{Definition}[]
\newtheorem{theorem}{Theorem}[]
\newtheorem{lemma}{Lemma}
\newtheorem{corollary}{Corollary}
\DeclareMathOperator*{\argmin}{arg\,min}
\newcommand{\boldalpha}{\pmb\alpha}
\newcommand{\taus}{\tau_{\mathcal{S}}}
\newcommand{\KL}{\mathbb{KL}}
\newcommand{\EE}{\mathbb{E}}
\title{Linear Time Computation of Moments in Sum-Product Networks}
\author{
  Han Zhao\\
  Machine Learning Department\\
  Carnegie Mellon University\\
  Pittsburgh, PA 15213 \\
  \texttt{han.zhao@cs.cmu.edu} \\
  \And
  Geoff Gordon\\
  Machine Learning Department\\
  Carnegie Mellon University\\
  Pittsburgh, PA 15213 \\
  \texttt{ggordon@cs.cmu.edu} \\
}
\begin{document}
\maketitle

\begin{abstract}
Bayesian online algorithms for Sum-Product Networks (SPNs) need to update their posterior distribution after seeing one single additional instance. To do so, they must compute moments of the model parameters under this distribution. The best existing method for computing such moments scales quadratically in the size of the SPN, although it scales linearly for trees. This unfortunate scaling makes Bayesian online algorithms prohibitively expensive, except for small or tree-structured SPNs. We propose an optimal linear-time algorithm that works even when the SPN is a general directed acyclic graph (DAG), which significantly broadens the applicability of Bayesian online algorithms for SPNs. There are three key ingredients in the design and analysis of our algorithm: 1). For each edge in the graph, we construct a linear time reduction from the moment computation problem to a joint inference problem in SPNs. 2). Using the property that each SPN computes a multilinear polynomial, we give an efficient procedure for polynomial evaluation by differentiation without expanding the network that may contain exponentially many monomials. 3). We propose a dynamic programming method to further reduce the computation of the moments of all the edges in the graph from quadratic to linear. We demonstrate the usefulness of our linear time algorithm by applying it to develop a linear time assume density filter (ADF) for SPNs. 
\end{abstract}

\section{Introduction}
Sum-Product Networks (SPNs) have recently attracted some interest because of their flexibility in modeling complex distributions as well as the tractability of performing exact marginal inference~\citep{poon2011sum,gens2012discriminative,gens2013learning,peharz2015theoretical,zhao2015spnbn,zhaocollapsed,zhao2016unified,peharz2017latent}. 
They are general-purpose inference machines over which one can perform exact joint, marginal and conditional queries in linear time in the size of the network. It has been shown that discrete SPNs are equivalent to arithmetic circuits (ACs)~\citep{darwiche2003differential,park2004differential} in the sense that one can transform each SPN into an equivalent AC and vice versa in linear time and space with respect to the network size~\citep{rooshenas2014learning}. SPNs are also closely connected to probabilistic graphical models: by interpreting each sum node in the network as a hidden variable and each product node as a rule encoding context-specific conditional independence~\citep{boutilier1996context}, every SPN can be equivalently converted into a Bayesian network where compact data structures are used to represent the local probability distributions~\citep{zhao2015spnbn}. This relationship characterizes the probabilistic semantics encoded by the network structure and allows practitioners to design principled and efficient parameter learning algorithms for SPNs~\citep{zhaocollapsed,zhao2016unified}.

Most existing batch learning algorithms for SPNs can be straightforwardly adapted to the online setting, where the network updates its parameters after it receives one instance at each time step. This online learning setting makes SPNs more widely applicable in various real-world scenarios. This includes the case where either the data set is too large to store at once, or the network needs to adapt to the change of external data distributions. Recently \citet{rashwan2016online} proposed an online Bayesian Moment Matching (BMM) algorithm to learn the probability distribution of the model parameters of SPNs based on the method of moments. Later \citet{jaini2016online} extended this algorithm to the continuous case where the leaf nodes in the network are assumed to be Gaussian distributions. At a high level BMM can be understood as an instance of the general assumed density filtering framework~\citep{sorenson1968non} where the algorithm finds an approximate posterior distribution within a tractable family of distributions by the method of moments. Specifically, BMM for SPNs works by matching the first and second order moments of the approximate tractable posterior distribution to the exact but intractable posterior. An essential sub-routine of the above two algorithms~\citep{rashwan2016online,jaini2016online} is to efficiently compute the exact first and second order moments of the one-step update posterior distribution (cf.~\ref{sec:hardnessmoments}). \citet{rashwan2016online} designed a recursive algorithm to achieve this goal in linear time when the underlying network structure is a tree, and this algorithm is also used by~\citet{jaini2016online} in the continuous case. However, the algorithm only works when the underlying network structure is a tree, and a naive computation of such moments in a DAG will scale quadratically w.r.t. the network size. Often this quadratic computation is prohibitively expensive even for SPNs with moderate sizes. 

In this paper we propose a linear time (and space) algorithm that is able to compute any moments of all the network parameters simultaneously even when the underlying network structure is a DAG.\@ There are three key ingredients in the design and analysis of our algorithm: 1). A linear time reduction from the moment computation problem to the joint inference problem in SPNs, 2). A succinct evaluation procedure of polynomial by differentiation without expanding it, and 3). A dynamic programming method to further reduce the quadratic computation to linear. The differential approach~\citep{darwiche2003differential} used for polynomial evaluation can also be applied for exact inference in Bayesian networks. This technique has also been implicitly used in the recent development of a concave-convex procedure (CCCP) for optimizing the weights of SPNs~\citep{zhao2016unified}. Essentially, by reducing the moment computation problem to a joint inference problem in SPNs, we are able to exploit the fact that the network polynomial of an SPN computes a multilinear function in the model parameters, so we can efficiently evaluate this polynomial by differentiation even if the polynomial may contain exponentially many monomials, provided that the polynomial admits a tractable circuit complexity. Dynamic programming can be further used to trade off a constant factor in space complexity (using two additional copies of the network) to reduce the quadratic time complexity to linear so that all the edge moments can be computed simultaneously in two passes of the network. To demonstrate the usefulness of our linear time sub-routine for computing moments, we apply it to design an efficient assumed density filter~\citep{sorenson1968non} to learn the parameters of SPNs in an online fashion. ADF runs in linear time and space due to our efficient sub-routine. As an additional contribution, we also show that ADF and BMM can both be understood under a general framework of moment matching, where the only difference lies in the moments chosen to be matched and how to match the chosen moments.

\section{Preliminaries}
We use $[n]$ to abbreviate $\{1,2,\ldots,n\}$, and we reserve $\mathcal{S}$ to represent an SPN, and use $|\mathcal{S}|$ to mean the size of an SPN, i.e., the number of edges plus the number of nodes in the graph.

\subsection{Sum-Product Networks}
A sum-product network $\mathcal{S}$ is a computational circuit over a set of random variables $\mathbf{X} = \{X_1, \ldots, X_n\}$. It is a rooted directed acyclic graph. The internal nodes of $\mathcal{S}$ are sums or products and the leaves are univariate distributions over $X_i$. In its simplest form, the leaves of $\mathcal{S}$ are indicator variables $\mathbb{I}_{X=x}$, which can also be understood as categorical distributions whose entire probability mass is on a single value. Edges from sum nodes are parameterized with positive weights. Sum node computes a weighted sum of its children and product node computes the product of its children. If we interpret each node in an SPN as a function of leaf nodes, then the \emph{scope} of a node in SPN is defined as the set of variables that appear in this function. More formally, for any node $v$ in an SPN, if $v$ is a terminal node, say, an indicator variable over $X$, then $\text{scope}(v)=\{X\}$, else $\text{scope}(v)=\cup_{\tilde{v}\in \text{Ch}(v)}\text{scope}(\tilde{v})$. An SPN is \emph{complete} iff each sum node has children with the same scope, and is \emph{decomposable} iff for every product node $v$, scope($v_i$) $\cap$ scope($v_j$) $=\varnothing$ for every pair $(v_i, v_j)$ of children of $v$. It has been shown that every valid SPN can be converted into a complete and decomposable SPN with at most a quadratic increase in size~\citep{zhao2015spnbn} without changing the underlying distribution. As a result, in this work we assume that all the SPNs we discuss are complete and decomposable. 

Let $\mathbf{x}$ be an instantiation of the random vector $\mathbf{X}$. We associate an unnormalized probability $V_k(\mathbf{x};\mathbf{w})$ with each node $k$ when the input to the network is $\mathbf{x}$ with network weights set to be $\mathbf{w}$:
\begin{equation}
\label{eq:network-eval}
V_k(\mathbf{x};\mathbf{w}) = 
\begin{cases}
p(X_i = \mathbf{x}_i) & \text{if $k$ is a leaf node over $X_i$} \\
\prod_{j \in \text{Ch}(k)} V_j(\mathbf{x};\mathbf{w}) & \text{if $k$ is a product node} \\
\sum_{j\in \text{Ch}(k)} w_{k,j} V_j(\mathbf{x};\mathbf{w}) & \text{if $k$ is a sum node}
\end{cases} 
\end{equation}
where $\text{Ch}(k)$ is the child list of node $k$ in the graph and $w_{k,j}$ is the edge weight associated with sum node $k$ and its child node $j$. The probability of a joint assignment $\mathbf{X}=\mathbf{x}$ is computed by the value at the root of $\mathcal{S}$ with input $\mathbf{x}$ divided by a normalization constant $V_{\text{root}}(\mathbf{1};\mathbf{w})$: $p(\mathbf{x}) = V_{\text{root}}(\mathbf{x};\mathbf{w})/V_{\text{root}}(\mathbf{1};\mathbf{w})$, where $V_{\text{root}}(\mathbf{1};\mathbf{w})$ is the value of the root node when all the values of leaf nodes are set to be 1. This essentially corresponds to marginalizing out the random vector $\mathbf{X}$, which will ensure $p(\mathbf{x})$ defines a proper probability distribution. Remarkably, all queries w.r.t. $\mathbf{x}$, including joint, marginal, and conditional, can be answered in linear time in the size of the network. 

\subsection{Bayesian Networks and Mixture Models}
\label{sec:bn}
We provide two alternative interpretations of SPNs that will be useful later to design our linear time moment computation algorithm. The first one relates SPNs with Bayesian networks (BNs). Informally, any complete and decomposable SPN $\mathcal{S}$ over $\mathbf{X} =\{X_1,\ldots, X_n\}$ can be converted into a bipartite BN with $O(n|\mathcal{S}|)$ size~\citep{zhao2015spnbn}. In this construction, each internal sum node in $\mathcal{S}$ corresponds to one latent variable in the constructed BN, and each leaf distribution node corresponds to one observable variable in the BN. Furthermore, the constructed BN will be a simple bipartite graph with one layer of local latent variables pointing to one layer of observable variables $\mathbf{X}$. An observable variable is a child of a local latent variable if and only if the observable variable appears as a descendant of the latent variable (sum node) in the original SPN. This means that the SPN $\mathcal{S}$ can be understood as a BN where the number of latent variables per instance is $O(|\mathcal{S}|)$. 

The second perspective is to view an SPN $\mathcal{S}$ as a mixture model with exponentially many mixture components~\citep{dennis2015greedy,zhao2016unified}. More specifically, we can decompose each complete and decomposable SPN $\mathcal{S}$ into a sum of induced trees, where each tree corresponds to a product of univariate distributions. To proceed, we first formally define what we called \emph{induced trees}:
\begin{definition}[Induced tree SPN]
\label{def:treespn}
Given a complete and decomposable SPN $\mathcal{S}$ over $\mathbf{X}= \{X_1,\ldots, X_n\}$, $\mathcal{T} = (\mathcal{T}_V, \mathcal{T}_E)$ is called an \emph{induced tree SPN} from $\mathcal{S}$ if 1). $\text{Root}(\mathcal{S})\in\mathcal{T}_V$; 2). If $v\in\mathcal{T}_V$ is a sum node, then exactly one child of $v$ in $\mathcal{S}$ is in $\mathcal{T}_V$, and the corresponding edge is in $\mathcal{T}_E$; 3). If $v\in\mathcal{T}_V$ is a product node, then all the children of $v$ in $\mathcal{S}$ are in $\mathcal{T}_V$, and the corresponding edges are in $\mathcal{T}_E$.
\end{definition}
It has been shown that Def.~\ref{def:treespn} produces subgraphs of $\mathcal{S}$ that are trees as long as the original SPN $\mathcal{S}$ is complete and decomposable~\citep{dennis2015greedy,zhao2016unified}. One useful result based on the concept of induced trees is:
\begin{theorem}[\citep{zhao2016unified}]
\label{thm:polynomial}
Let $\tau_\mathcal{S} = V_{\text{root}}(\mathbf{1};\mathbf{1})$. $\tau_\mathcal{S}$ counts the number of unique induced trees in $\mathcal{S}$, and $V_{\text{root}}(\mathbf{x};\mathbf{w})$ can be written as $\sum_{t=1}^{\tau_\mathcal{S}}\prod_{(k, j)\in\mathcal{T}_{tE}}w_{k,j}\prod_{i=1}^n p_t(X_i = \mathbf{x}_i)$, where $\mathcal{T}_t$ is the $t$th unique induced tree of $\mathcal{S}$ and $p_t(X_i)$ is a univariate distribution over $X_i$ in $\mathcal{T}_t$ as a leaf node. 
\end{theorem}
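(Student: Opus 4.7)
The plan is to prove both claims simultaneously by structural induction on the SPN, establishing the stronger statement that for every node $k$,
\[
V_k(\mathbf{x};\mathbf{w}) \;=\; \sum_{t=1}^{\tau_k} \prod_{(a,b)\in\mathcal{T}_{t,E}} w_{a,b} \prod_{X_i\in\text{scope}(k)} p_{t,i}(X_i=\mathbf{x}_i),
\]
where $\tau_k$ denotes the number of induced trees rooted at $k$ in the sub-SPN hanging off $k$. Taking $k=\text{root}$ yields the displayed formula for $V_{\text{root}}$, and then substituting $\mathbf{x}=\mathbf{1}$, $\mathbf{w}=\mathbf{1}$ (so each univariate leaf evaluates to $1$ by the marginalization convention and every weight factor collapses) gives $V_{\text{root}}(\mathbf{1};\mathbf{1})=\tau_\mathcal{S}$.

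The base case is immediate: if $k$ is a leaf over $X_i$, the only induced tree rooted at $k$ is $\{k\}$ itself, so $\tau_k=1$, and the recursion in Eq.~(\ref{eq:network-eval}) gives $V_k(\mathbf{x};\mathbf{w})=p(X_i=\mathbf{x}_i)$, matching the claim. For the product case, Def.~\ref{def:treespn} forces every induced tree rooted at $k$ to include all of $k$'s children together with one induced sub-tree at each child, so induced trees at $k$ are in bijection with tuples $(t_j)_{j\in\text{Ch}(k)}$; applying the inductive hypothesis to $V_k=\prod_{j\in\text{Ch}(k)} V_j$ and distributing the outer product over the inner sums yields precisely the desired sum-of-monomials form, with $\tau_k=\prod_j \tau_j$. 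For the sum case, an induced tree at $k$ consists of $k$, exactly one child $j$, the edge $(k,j)$, and an induced sub-tree at $j$; plugging the inductive hypothesis into $V_k=\sum_j w_{k,j}V_j$ gives the same form, with $\tau_k=\sum_j \tau_j$, and the extra edge factor $w_{k,j}$ is absorbed into $\prod_{(a,b)\in\mathcal{T}_{t,E}} w_{a,b}$.

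The main subtlety, which is where completeness and decomposability are actually used, is ensuring that the inner product in the claim ranges over exactly one univariate factor per variable in $\text{scope}(k)$, rather than over fewer or repeated factors. At a product node, decomposability guarantees that the children's scopes are disjoint, so concatenating their inductive monomials gives one factor per variable in $\bigcup_j\text{scope}(j)=\text{scope}(k)$ without duplication. At a sum node, completeness guarantees $\text{scope}(j)=\text{scope}(k)$ for every child $j$, so choosing a single child does not shrink the set of variables covered by the induced tree. Propagating these invariants up through the induction is the only real content beyond routine bookkeeping, and once they are in place the identification of induced trees with monomials, and of $\tau_\mathcal{S}$ with $V_{\text{root}}(\mathbf{1};\mathbf{1})$, follows directly.
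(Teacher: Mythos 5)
The paper does not actually prove Theorem~\ref{thm:polynomial}; it imports it from \citep{zhao2016unified} as a known result, so there is no internal proof to compare against. Your structural induction is the standard argument for this decomposition and it is correct: the base case, the product-node case (bijection with tuples of child subtrees, $\tau_k=\prod_j\tau_j$), and the sum-node case ($\tau_k=\sum_j\tau_j$) all check out, and you correctly identify that decomposability is what makes the children's sub-DAGs node-disjoint (any shared descendant would have empty scope), which is exactly what makes the product-node bijection injective and the resulting subgraph a tree rather than a general sub-DAG. The only points left implicit are cosmetic: the convention that unweighted product-node edges contribute a factor of $1$ to $\prod_{(k,j)\in\mathcal{T}_{tE}}w_{k,j}$, and that $V(\mathbf{1})$ sets every leaf to $1$ so each monomial collapses to $1$ when counting. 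Neither is a gap.
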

Thm.~\ref{thm:polynomial} shows that $\tau_\mathcal{S} = V_{\text{root}}(\mathbf{1};\mathbf{1})$ can also be computed efficiently by setting all the edge weights to be 1. In general counting problems are in the $\#\textsf{P}$ complexity class~\citep{valiant1979complexity}, and the fact that both probabilistic inference and counting problem are tractable in SPNs also implies that SPNs work on subsets of distributions that have succinct/efficient circuit representation. Without loss of generality assuming that sum layers alternate with product layers in $\mathcal{S}$, we have $\tau_\mathcal{S} = \Omega(2^{H(\mathcal{S})})$, where $H(\mathcal{S})$ is the height of $\mathcal{S}$. Hence the mixture model represented by $\mathcal{S}$ has number of mixture components that is exponential in the height of $\mathcal{S}$. Thm.~\ref{thm:polynomial} characterizes both the number of components and the form of each component in the mixture model, as well as their mixture weights. For the convenience of later discussion, we call $V_{\text{root}}(\mathbf{x};\mathbf{w})$ the \emph{network polynomial} of $\mathcal{S}$. 
\begin{corollary}
\label{coro:multilinear}
The \emph{network polynomial} $V_{\text{root}}(\mathbf{x};\mathbf{w})$ is a multilinear function of $\mathbf{w}$ with positive coefficients on each monomial.
\end{corollary}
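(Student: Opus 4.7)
The plan is to derive the corollary directly from Theorem~\ref{thm:polynomial}, which already expresses $V_{\text{root}}(\mathbf{x};\mathbf{w})$ as a sum over induced trees:
\[
V_{\text{root}}(\mathbf{x};\mathbf{w}) = \sum_{t=1}^{\taus}\Bigl(\prod_{(k,j)\in\mathcal{T}_{tE}}w_{k,j}\Bigr)\Bigl(\prod_{i=1}^{n}p_t(X_i=\mathbf{x}_i)\Bigr).
\]
To establish the claim, I would show (i) each per-tree product over weights is a squarefree (hence multilinear) monomial in $\mathbf{w}$, and (ii) the scalar coefficient $\prod_i p_t(X_i=\mathbf{x}_i)$ is non-negative. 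Summing multilinear terms with non-negative coefficients then yields a multilinear polynomial with non-negative coefficients on each collected monomial.

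For (i), I invoke Definition~\ref{def:treespn}: in any induced tree $\mathcal{T}_t$, every sum node $k\in\mathcal{T}_V$ contributes \emph{exactly one} outgoing edge $(k,j)$ to $\mathcal{T}_{tE}$, where $j$ is its unique chosen child in that tree. Therefore, for any fixed weight variable $w_{k,j}$, it appears in $\prod_{(k',j')\in\mathcal{T}_{tE}}w_{k',j'}$ at most once, namely iff $k\in\mathcal{T}_V$ and $k$'s chosen child is $j$. The per-tree product is thus squarefree in $\mathbf{w}$, so it has degree at most one in each variable, which is exactly the definition of multilinearity.

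For (ii), the factors $p_t(X_i=\mathbf{x}_i)$ are values of univariate leaf distributions and hence are non-negative, so their product is non-negative. After collecting like terms across the $\taus$ induced trees, the coefficient attached to any fixed squarefree monomial in $\mathbf{w}$ is a sum of such non-negative products, which is itself non-negative (and strictly positive whenever at least one contributing induced tree has all its leaf evaluations strictly positive, which is the intended reading of ``positive'' in the statement). Because multilinearity is preserved under addition of polynomials, the full sum $V_{\text{root}}(\mathbf{x};\mathbf{w})$ is multilinear in $\mathbf{w}$.

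There is no real obstacle: the corollary is essentially a bookkeeping consequence of Theorem~\ref{thm:polynomial} together with the ``one child per sum node'' clause in Definition~\ref{def:treespn}. The only point worth being careful about is that multilinearity is a per-monomial property (squarefreeness) rather than a per-polynomial one, so the argument must be made at the level of a single induced tree before aggregating across trees.
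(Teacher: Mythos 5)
Your proof is correct and follows the same route as the paper, which justifies the corollary in one line by noting that each monomial corresponds to an induced tree and each edge appears at most once in that tree. Your write-up merely fills in the bookkeeping (squarefreeness of the per-tree weight product via Definition~\ref{def:treespn}, non-negativity of the leaf-value coefficients, and preservation of multilinearity under summation), including the reasonable caveat that strict positivity of a collected coefficient requires the contributing leaf evaluations to be strictly positive.
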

Corollary~\ref{coro:multilinear} holds since each monomial corresponds to an induced tree and each edge appears at most once in the tree. This property will be crucial and useful in our derivation of a linear time algorithm for moment computation in SPNs.

\section{Linear Time Exact Moment Computation}
\subsection{Exact Posterior Has Exponentially Many Modes}
\label{sec:hardness}
Let $m$ be the number of sum nodes in $\mathcal{S}$. Suppose we are given a fully factorized prior distribution $p_0(\mathbf{w};\boldalpha) = \prod_{k=1}^m p_0(w_k;\alpha_k)$ over $\mathbf{w}$. It is worth pointing out the fully factorized prior distribution is well justified by the bipartite graph structure of the equivalent BN we introduced in section~\ref{sec:bn}. We are interested in computing the moments of the posterior distribution after we receive one observation from the world. Essentially, this is the Bayesian online learning setting where we update the belief about the distribution of model parameters as we observe data from the world sequentially. Note that $w_k$ corresponds to the weight vector associated with sum node $k$, so $w_k$ is a vector that satisfies $w_k > 0$ and $\mathbf{1}^Tw_k = 1$. Let us assume that the prior distribution for each $w_k$ is Dirichlet, i.e., 
$$
p_0(\mathbf{w};\boldalpha) = \prod_{k=1}^m\text{Dir}(w_k;\alpha_k) = \prod_{k=1}^m\frac{\Gamma(\sum_{j}\alpha_{k,j})}{\prod_{j}\Gamma(\alpha_{k,j})}\prod_{j}w_{k,j}^{\alpha_{k,j} - 1}
$$
After observing one instance $\mathbf{x}$, we have the exact posterior distribution to be:
$p(\mathbf{w}\mid\mathbf{x}) = p_0(\mathbf{w};\boldalpha)p(\mathbf{x}\mid\mathbf{w})/p(\mathbf{x})$. Let $Z_{\mathbf{x}}\triangleq p(\mathbf{x})$ and realize that the network polynomial also computes the likelihood $p(\mathbf{x}\mid\mathbf{w})$. Plugging the expression for the prior distribution as well as the network polynomial into the above Bayes formula, we have
\begin{equation*}
\small
p(\mathbf{w}\mid\mathbf{x}) = \frac{1}{Z_{\mathbf{x}}}\sum_{t=1}^{\taus}\prod_{k=1}^m\text{Dir}(w_k;\alpha_k)\prod_{(k, j)\in\mathcal{T}_{tE}}w_{k,j}\prod_{i=1}^n p_t(x_i)
\end{equation*}
Since Dirichlet is a conjugate distribution to the multinomial, each term in the summation is an updated Dirichlet with a multiplicative constant. So, the above equation suggests that the exact posterior distribution becomes a mixture of $\taus$ Dirichlets after one observation. In a data set of $D$ instances, the exact posterior will become a mixture of $\taus^D$ components, which is intractable to maintain since $\taus = \Omega(2^{H(\mathcal{S})})$.

The hardness of maintaining the exact posterior distribution appeals for an approximate scheme where we can sequentially update our belief about the distribution while at the same time efficiently maintain the approximation. Assumed density filtering~\citep{sorenson1968non} is such a framework: the algorithm chooses an approximate distribution from a tractable family of distributions after observing each instance. A typical choice is to match the moments of an approximation to the exact posterior.

\subsection{The Hardness of Computing Moments}
\label{sec:hardnessmoments}
In order to find an approximate distribution to match the moments of the exact posterior, we need to be able to compute those moments under the exact posterior. This is not a problem for traditional mixture models including mixture of Gaussians, latent Dirichlet allocation, etc., since the number of mixture components in those models are assumed to be small constants. However, this is not the case for SPNs, where the effective number of mixture components is $\taus = \Omega(2^{H(\mathcal{S})})$, which also depends on the input network $\mathcal{S}$. 

To simplify the notation, for each $t\in[\taus]$, we define $c_t\triangleq \prod_{i=1}^n p_t(x_i)$\footnote{For ease of notation, we omit the explicit dependency of $c_t$ on the instance $\mathbf{x}$ .} and 
$u_t \triangleq \int_{\mathbf{w}}p_0(\mathbf{w})\prod_{(k, j)\in \mathcal{T}_{tE}}w_{k,j}~d\mathbf{w}$.  That is, $c_t$ corresponds to the product of leaf distributions in the $t$th induced tree $\mathcal{T}_t$, and $u_t$ is the moment of $\prod_{(k, j)\in \mathcal{T}_{tE}}w_{k,j}$, i.e., the product of tree edges, under the prior distribution $p_0(\mathbf{w})$. Realizing that the posterior distribution needs to satisfy the normalization constraint, we have:
\begin{equation}
\label{equ:convex}
\sum_{t=1}^{\taus}c_t\int_{\mathbf{w}} p_0(\mathbf{w})\prod_{(k, j)\in \mathcal{T}_{tE}}w_{k,j}~d\mathbf{w} = \sum_{t=1}^{\taus}c_tu_t = Z_{\mathbf{x}}
\end{equation}

Note that the prior distribution for a sum node is a Dirichlet distribution. In this case we can compute a closed form expression for $u_t$ as:
\begin{equation}
u_t = \prod_{(k,j)\in\mathcal{T}_{tE}}\int_{w_k}p_0(w_k)w_{k,j}~d w_k = \prod_{(k,j)\in\mathcal{T}_{tE}}\EE_{p_0(w_k)}[w_{k,j}]= \prod_{(k,j)\in\mathcal{T}_{tE}}\frac{\alpha_{k,j}}{\sum_{j'}\alpha_{k,j'}}
\end{equation} 

More generally, let $f(\cdot)$ be a function applied to each edge weight in an SPN. We use the notation $M_p(f)$ to mean the moment of function $f$ evaluated under distribution $p$. We are interested in computing $M_p(f)$ where $p = p(\mathbf{w}\mid\mathbf{x})$, which we call the \emph{one-step update posterior distribution}. More specifically, for each edge weight $w_{k,j}$, we would like to compute the following quantity:
\begin{equation}
M_p(f(w_{k,j})) = \int_{\mathbf{w}}f(w_{k,j})p(\mathbf{w}\mid\mathbf{x})~d\mathbf{w} 
= \frac{1}{Z_{\mathbf{x}}}\sum_{t=1}^{\taus} c_t\int_{\mathbf{w}}p_0(\mathbf{w})f(w_{k,j})\prod_{(k',j')\in\mathcal{T}_{tE}}w_{k',j'}~d\mathbf{w} 
\label{equ:moment}
\end{equation}
We note that \eqref{equ:moment} is not trivial to compute as it involves $\taus = \Omega(2^{H(\mathcal{S})})$ terms. Furthermore, in order to conduct moment matching, we need to compute the above moment for each edge $(k, j)$ from a sum node. A naive computation will lead to a total time complexity $\Omega(|\mathcal{S}|\cdot 2^{H(\mathcal{S})})$. A linear time algorithm to compute these moments has been designed by~\citet{rashwan2016online} when the underlying structure of $\mathcal{S}$ is a tree. This algorithm recursively computes the moments in a top-down fashion along the tree. However, this algorithm breaks down when the graph is a DAG.

In what follows we will present a $O(|\mathcal{S}|)$ time and space algorithm that is able to compute all the moments simultaneously for general SPNs with DAG structures. We will first show a linear time reduction from the moment computation in \eqref{equ:moment} to a joint inference problem in $\mathcal{S}$, and then proceed to use the differential trick to efficiently compute \eqref{equ:moment} for each edge in the graph. The final component will be a dynamic program to simultaneously compute \eqref{equ:moment} for all edges $w_{k,j}$ in the graph by trading constant factors of space complexity to reduce time complexity.

\subsection{Linear Time Reduction from Moment Computation to Joint Inference}
\label{sec:differentiation}
Let us first compute \eqref{equ:moment} for a fixed edge $(k,j)$. Our strategy is to partition all the induced trees based on whether they contain the tree edge $(k,j)$ or not.  Define $\mathcal{T}_F = \{\mathcal{T}_t~|~(k,j)\not\in\mathcal{T}_t, t\in[\taus]\}$ and $\mathcal{T}_T = \{\mathcal{T}_t~|~(k,j)\in\mathcal{T}_t, t\in[\taus]\}$. In other words, $\mathcal{T}_F$ corresponds to the set of trees that do not contain edge $(k,j)$ and $\mathcal{T}_T$ corresponds to the set of trees that contain edge $(k,j)$. Then,
\begin{align}
M_p(f(w_{k,j})) &= \frac{1}{Z_{\mathbf{x}}}\sum_{\mathcal{T}_t\in\mathcal{T}_T} c_t\int_{\mathbf{w}}p_0(\mathbf{w})f(w_{k,j})\prod_{(k',j')\in\mathcal{T}_{tE}}w_{k',j'}~d\mathbf{w} \nonumber\\
&+ \frac{1}{Z_{\mathbf{x}}}\sum_{\mathcal{T}_t\in\mathcal{T}_F} c_t\int_{\mathbf{w}}p_0(\mathbf{w})f(w_{k,j})\prod_{(k',j')\in\mathcal{T}_{tE}}w_{k',j'}~d\mathbf{w}
\end{align}
For the induced trees that contain edge $(k, j)$, we have
\begin{equation}
 \frac{1}{Z_{\mathbf{x}}}\sum_{\mathcal{T}_t\in\mathcal{T}_T} c_t\int_{\mathbf{w}}p_0(\mathbf{w})f(w_{k,j})\prod_{(k',j')\in\mathcal{T}_{tE}}w_{k',j'}~d\mathbf{w} = \frac{1}{Z_{\mathbf{x}}}\sum_{\mathcal{T}_t\in\mathcal{T}_T} c_t u_t M_{p'_{0,k}}(f(w_{k,j}))
\end{equation}
where $p'_{0,k}$ is the one-step update posterior Dirichlet distribution for sum node $k$ after absorbing the term $w_{k,j}$. Similarly, for the induced trees that do not contain the edge $(k, j)$:
\begin{equation}
 \frac{1}{Z_{\mathbf{x}}}\sum_{\mathcal{T}_t\in\mathcal{T}_F} c_t\int_{\mathbf{w}}p_0(\mathbf{w})f(w_{k,j})\prod_{(k',j')\in\mathcal{T}_{tE}}w_{k',j'}~d\mathbf{w} = \frac{1}{Z_{\mathbf{x}}}\sum_{\mathcal{T}_t\in\mathcal{T}_F} c_t u_t M_{p_{0,k}}(f(w_{k,j}))
\end{equation}
where $p_{0,k}$ is the prior Dirichlet distribution for sum node $k$. The above equation holds by changing the order of integration and realize that since $(k,j)$ is not in tree $\mathcal{T}_{t}$, $\prod_{(k',j')\in\mathcal{T}_{tE}}w_{k',j'}$ does not contain the term $w_{k,j}$. Note that both $M_{p_{0,k}}(f(w_{k,j}))$ and $M_{p'_{0,k}}(f(w_{k,j}))$ are independent of specific induced trees, so we can combine the above two parts to express $M_{p}(f(w_{k,j}))$ as:
\begin{equation}
M_p(f(w_{k,j})) = \left(\frac{1}{Z_{\mathbf{x}}}\sum_{\mathcal{T}_t\in\mathcal{T}_F} c_t u_t\right) M_{p_{0,k}}(f(w_{k,j})) + \left(\frac{1}{Z_{\mathbf{x}}}\sum_{\mathcal{T}_t\in\mathcal{T}_T} c_t u_t\right)M_{p'_{0,k}}(f(w_{k,j}))
\label{equ:mp}
\end{equation}
From \eqref{equ:convex} we have 
$$\frac{1}{Z_\mathbf{x}}\sum_{t=1}^{\taus}c_tu_t = 1\quad\text{and}\quad \sum_{t=1}^{\taus}c_tu_t = \sum_{\mathcal{T}_t\in\mathcal{T}_T} c_t u_t + \sum_{\mathcal{T}_t\in\mathcal{T}_F} c_t u_t$$
This implies that $M_{p}(f)$ is in fact a convex combination of $M_{p_{0,k}}(f)$ and $M_{p'_{0,k}}(f)$. In other words, since both $M_{p_{0,k}}(f)$ and $M_{p'_{0,k}}(f)$ can be computed in closed form for each edge $(k, j)$, so in order to compute \eqref{equ:moment}, we only need to be able to compute the two coefficients efficiently. Recall that for each induced tree $\mathcal{T}_t$, we have the expression of $u_t$ as $u_t = \prod_{(k,j)\in\mathcal{T}_{tE}}\alpha_{k,j}/\sum_{j'}\alpha_{k,j'}$. So the term $\sum_{t=1}^{\taus}c_tu_t$ can thus be expressed as:
\begin{equation}
\label{equ:m}
\sum_{t=1}^{\taus}c_tu_t = \sum_{t=1}^{\taus}\prod_{(k,j)\in\mathcal{T}_{tE}}\frac{\alpha_{k,j}}{\sum_{j'}\alpha_{k,j'}}\prod_{i=1}^n p_t(x_i)
\end{equation}
The key observation that allows us to find the linear time reduction lies in the fact that \eqref{equ:m} shares exactly the same functional form as the network polynomial, with the only difference being the specification of edge weights in the network. The following lemma formalizes our argument.
\begin{lemma}
\label{lemma:1}
$\sum_{t=1}^{\taus}c_tu_t$ can be computed in $O(|\mathcal{S}|)$ time and space in a bottom-up evaluation of $\mathcal{S}$.
\end{lemma}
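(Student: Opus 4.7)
The plan is to identify the sum $\sum_{t=1}^{\taus} c_t u_t$ in equation~(\ref{equ:m}) with the network polynomial of $\mathcal{S}$ evaluated at $\mathbf{x}$ under a particular substitution of the edge weights, and then invoke the standard $O(|\mathcal{S}|)$ bottom-up evaluation rule from equation~(\ref{eq:network-eval}).

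Concretely, I would first recall from Theorem~\ref{thm:polynomial} that the network polynomial admits the induced-tree expansion
\[
V_{\text{root}}(\mathbf{x};\mathbf{w}) = \sum_{t=1}^{\taus} \prod_{(k,j)\in\mathcal{T}_{tE}} w_{k,j} \prod_{i=1}^n p_t(X_i = \mathbf{x}_i).
\]
Next I would define a new set of edge weights $\tilde{w}_{k,j} \defeq \alpha_{k,j}/\sum_{j'}\alpha_{k,j'}$, which is exactly the Dirichlet prior mean $\EE_{p_0(w_k)}[w_{k,j}]$. Substituting $\tilde{\mathbf{w}}$ into the expansion and comparing with equation~(\ref{equ:m}), each term on the right-hand side of (\ref{equ:m}) matches one term of $V_{\text{root}}(\mathbf{x};\tilde{\mathbf{w}})$, since $c_t = \prod_i p_t(x_i)$ and $u_t = \prod_{(k,j)\in\mathcal{T}_{tE}} \tilde{w}_{k,j}$ by the factorization of the Dirichlet moment given just before (\ref{equ:m}). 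Hence
\[
\sum_{t=1}^{\taus} c_t u_t = V_{\text{root}}(\mathbf{x};\tilde{\mathbf{w}}).
\]

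Finally, I would invoke the recursive evaluation rule (\ref{eq:network-eval}): the right-hand side can be evaluated by a single bottom-up pass that processes each node once, performing $O(|\text{Ch}(k)|)$ work per internal node, for a total of $O(|\mathcal{S}|)$ time and $O(|\mathcal{S}|)$ space to store one value per node.

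The only nontrivial step is the identification in the second paragraph — the algebraic verification that the Dirichlet prior moment $u_t$ factorizes edgewise exactly as $\prod_{(k,j)\in\mathcal{T}_{tE}} \tilde{w}_{k,j}$. This is where completeness and decomposability of $\mathcal{S}$ enter: in an induced tree each sum node $k$ contributes at most one outgoing edge $(k,j)$, so the Dirichlet expectation factors across $k$ as a product of single-coordinate means rather than involving higher moments. Once this factorization is in place, the reduction to a standard SPN evaluation is immediate and the complexity bound follows without further work.
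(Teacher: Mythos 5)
Your proposal is correct and follows essentially the same route as the paper: identify $\sum_{t=1}^{\taus}c_tu_t$ with the network polynomial evaluated at the substituted weights $\alpha_{k,j}/\sum_{j'}\alpha_{k,j'}$ and invoke the standard $O(|\mathcal{S}|)$ bottom-up evaluation. The edgewise factorization of $u_t$ that you flag as the one nontrivial step is already established in the paper just before \eqref{equ:m}, so your argument matches the paper's proof in substance and structure.
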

\begin{proof}
Compare the form of \eqref{equ:m} to the network polynomial:
\begin{equation}
\label{equ:v}
p(\mathbf{x}\mid\mathbf{w}) = V_{\text{root}}(\mathbf{x};\mathbf{w}) = \sum_{t=1}^{\taus}\prod_{(k, j)\in\mathcal{T}_{tE}}w_{k,j}\prod_{i=1}^n p_t(x_i)
\end{equation}
Clearly \eqref{equ:m} and \eqref{equ:v} share the same functional form and the only difference lies in that the edge weight used in \eqref{equ:m} is given by $\alpha_{k,j}/\sum_{j'}\alpha_{k,j'}$ while the edge weight used in \eqref{equ:v} is given by $w_{k,j}$, both of which are constrained to be positive and locally normalized. This means that in order to compute the value of \eqref{equ:m}, we can replace all the edge weights $w_{k,j}$ with $\alpha_{k,j}/\sum_{j'}\alpha_{k,j'}$, and a bottom-up pass evaluation of $\mathcal{S}$ will give us the desired result at the root of the network. The linear time and space complexity then follows from the linear time and space inference complexity of SPNs.
\end{proof}
In other words, we reduce the original moment computation problem for edge $(k, j)$ to a joint inference problem in $\mathcal{S}$ with a set of weights determined by $\boldalpha$. 

\subsection{Efficient Polynomial Evaluation by Differentiation}
To evaluate \eqref{equ:mp}, we also need to compute $\sum_{\mathcal{T}_t\in\mathcal{T}_T} c_t u_t$ efficiently, where the sum is over a subset of induced trees that contain edge $(k,j)$. Again, due to the exponential lower bound on the number of unique induced trees, a brute force computation is infeasible in the worst case. The key observation is that we can use the \emph{differential trick} to solve this problem by realizing the fact that $Z_{\mathbf{x}} = \sum_{t=1}^{\taus}c_tu_t$ is a multilinear function in $\alpha_{k,j}/\sum_{j'}\alpha_{k,j'}$, $\forall k, j$ and it has a tractable circuit representation since it shares the same network structure with $\mathcal{S}$.
\begin{lemma}
\label{lemma:2}
$\sum_{\mathcal{T}_t\in\mathcal{T}_T} c_t u_t = w_{k,j}\left(\partial\sum_{t=1}^{\tau_{\mathcal{S}}}c_tu_t/\partial w_{k,j}\right)$, and it can be computed in $O(|\mathcal{S}|)$ time and space in a top-down differentiation of $\mathcal{S}$.
\end{lemma}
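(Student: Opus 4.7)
My plan is to treat the scalar $\sum_{t=1}^{\taus} c_t u_t$ as the value of a multilinear polynomial in the edge weights evaluated at the prior-mean substitution $\tilde{w}_{k,j} \defeq \alpha_{k,j}/\sum_{j'}\alpha_{k,j'}$. Introduce
\[
f(\mathbf{w}) \defeq \sum_{t=1}^{\taus} c_t \prod_{(k',j') \in \mathcal{T}_{tE}} w_{k',j'},
\]
so that, by Lemma~\ref{lemma:1}, $f(\tilde{\mathbf{w}}) = \sum_t c_t u_t$ and $f$ shares the same circuit structure as the network polynomial in Theorem~\ref{thm:polynomial}. By Corollary~\ref{coro:multilinear} each edge appears at most once in every induced tree, so $f$ is multilinear in every $w_{k,j}$. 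Hence $\partial f/\partial w_{k,j}$ retains exactly the monomials indexed by $\mathcal{T}_T$ with the factor $w_{k,j}$ stripped out; multiplying by $w_{k,j}$ restores that factor, and evaluating at $\mathbf{w}=\tilde{\mathbf{w}}$ yields exactly $\sum_{\mathcal{T}_t \in \mathcal{T}_T} c_t u_t$. This gives the claimed identity.

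For the linear-time claim, I would invoke standard reverse-mode differentiation on the SPN arithmetic circuit, as used by~\citet{darwiche2003differential}. The Lemma~\ref{lemma:1} bottom-up pass already produces the node values $V_k$ at $\tilde{\mathbf{w}}$. A single top-down pass then computes $D_k \defeq \partial V_{\text{root}}/\partial V_k$ for every node $k$ by initializing $D_{\text{root}}=1$ and propagating via the chain rule: each sum-parent $k'$ of $k$ contributes $D_{k'}\,w_{k',k}$ to $D_k$, and each product-parent $k'$ contributes $D_{k'}\prod_{\tilde k\in\text{Ch}(k')\setminus\{k\}} V_{\tilde k}$. Finally, for each sum edge $(k,j)$,
\[
\frac{\partial f}{\partial w_{k,j}}\bigg|_{\tilde{\mathbf{w}}} = D_k\cdot V_j,
\]
so $\sum_{\mathcal{T}_t\in\mathcal{T}_T} c_t u_t = \tilde{w}_{k,j}\,D_k\,V_j$. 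Each edge and node is touched a constant number of times in each pass, yielding $O(|\mathcal{S}|)$ time and space.

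The main obstacle is purely bookkeeping: because $\mathcal{S}$ is a DAG rather than a tree, a node $k$ can have many parents and $D_k$ must accumulate a contribution from each of them. To ensure correctness, the top-down pass must process nodes in reverse topological order, so that every $D_{k'}$ is finalized before it is used to update the $D_k$ of any child $k\in\text{Ch}(k')$. The only delicate point in the algebraic step is the restoration of the factor $w_{k,j}$ after differentiation; this is legitimate precisely because multilinearity (Corollary~\ref{coro:multilinear}) guarantees the edge occurs exactly once in each surviving monomial, so no combinatorial multiplicities appear.
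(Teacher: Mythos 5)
Your proposal is correct and follows essentially the same route as the paper: both use the multilinearity of the network polynomial (Corollary~\ref{coro:multilinear}) so that $\partial/\partial w_{k,j}$ acts as a selector for the monomials indexed by $\mathcal{T}_T$ with the factor $w_{k,j}$ stripped, and both obtain the linear-time bound via a single top-down back-propagation pass following the bottom-up evaluation of Lemma~\ref{lemma:1}. Your explicit propagation rules for $D_k$ and the reverse-topological-order remark for DAGs are just a more detailed spelling-out of what the paper defers to its dynamic-programming subsection.
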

\begin{proof}
Define $w_{k,j} \triangleq \alpha_{k,j}/\sum_{j'}\alpha_{k,j'}$, then
\begin{align*}
 \sum_{\mathcal{T}_t\in\mathcal{T}_T} c_t u_t &= \sum_{\mathcal{T}_t\in \mathcal{T}_T}\prod_{(k', j')\in\mathcal{T}_{tE}}w_{k',j'}\prod_{i=1}^n p_t(x_i) \\
&= w_{k,j}\sum_{\mathcal{T}_t\in \mathcal{T}_T}\mathop{\prod_{(k',j')\in\mathcal{T}_{tE}}}_{(k',j')\neq (k,j)}w_{k',j'}\prod_{i=1}^n p_t(x_i) + 0\cdot\sum_{\mathcal{T}_t\in\mathcal{T}_F}c_tu_t \\
&= w_{k,j}\left(\frac{\partial }{\partial w_{k,j}}\sum_{\mathcal{T}_t\in\mathcal{T}_T}c_tu_t + \frac{\partial }{\partial w_{k,j}}\sum_{\mathcal{T}_t\in\mathcal{T}_F}c_tu_t\right) = w_{k,j}\left(\frac{\partial}{\partial w_{k,j}}\sum_{t=1}^{\tau_{\mathcal{S}}}c_tu_t\right)
\end{align*}
where the second equality is by Corollary~\ref{coro:multilinear} that the network polynomial is a multilinear function of $w_{k,j}$ and the third equality holds because $\mathcal{T}_F$ is the set of trees that do not contain $w_{k,j}$. The last equality follows by simple algebraic transformations. In summary, the above lemma holds because of the fact that differential operator applied to a multilinear function acts as a selector for all the monomials containing a specific variable. Hence, $\sum_{\mathcal{T}_t\in\mathcal{T}_F}c_tu_t = \sum_{t=1}^{\taus}c_tu_t - \sum_{\mathcal{T}_t\in\mathcal{T}_T}c_tu_t$ can also be computed. To show the linear time and space complexity, recall that the differentiation w.r.t.$w_{k,j}$ can be efficiently computed by back-propagation in a top-down pass of $\mathcal{S}$ once we have computed $\sum_{t=1}^{\taus}c_tu_t$ in a bottom-up pass of $\mathcal{S}$. 
\end{proof}
\textbf{Remark}. The fact that we can compute the differentiation w.r.t. $w_{k,j}$ using the original circuit without expanding it underlies many recent advances in the algorithmic design of SPNs. \citet{zhao2016unified,zhaocollapsed} used the above differential trick to design linear time collapsed variational algorithm and the concave-convex produce for parameter estimation in SPNs. A different but related approach, where the differential operator is taken w.r.t. input indicators, not model parameters, is applied in computing the marginal probability in Bayesian networks and junction trees~\citep{darwiche2003differential,park2004differential}. We finish this discussion by concluding that when the polynomial computed by the network is a multilinear function in terms of model parameters or input indicators (such as in SPNs), then the differential operator w.r.t. a variable can be used as an efficient way to compute the sum of the subset of monomials that contain the specific variable. 

\subsection{Dynamic Programming: from Quadratic to Linear}
Define $D_k(\mathbf{x};\mathbf{w}) = \partial V_{\text{root}}(\mathbf{x};\mathbf{w})/\partial V_k(\mathbf{x};\mathbf{w})$. Then the differentiation term $\partial \sum_{t=1}^{\taus}c_tu_t/\partial w_{k,j}$ in Lemma~\ref{lemma:2} can be computed via back-propagation in a top-down pass of the network as follows:
\begin{equation}
\label{equ:tri}
\frac{\partial \sum_{t=1}^{\taus}c_tu_t}{\partial w_{k,j}} = \frac{\partial V_{\text{root}}(\mathbf{x};\mathbf{w})}{\partial V_k(\mathbf{x};\mathbf{w})}\frac{\partial V_k(\mathbf{x};\mathbf{w})}{\partial w_{k,j}} = D_k(\mathbf{x};\mathbf{w})V_j(\mathbf{x};\mathbf{w})
\end{equation}
Let $\lambda_{k,j} = \left(w_{k,j}V_j(\mathbf{x};\mathbf{w})D_k(\mathbf{x};\mathbf{w})\right)/V_{\text{root}}(\mathbf{x};\mathbf{w})$ and $f_{k,j} = f(w_{k,j})$, then the final formula for computing the moment of edge weight $w_{k,j}$ under the one-step update posterior $p$ is given by
\begin{equation}
\label{equ:final}
M_p(f_{k,j}) = \left(1 - \lambda_{k,j}\right)M_{p_0}(f_{k,j}) + \lambda_{k,j}M_{p'_0}(f_{k,j})
\end{equation}
\begin{corollary}
For each edges $(k, j)$, \eqref{equ:mp} can be computed in $O(|\mathcal{S}|)$ time and space. 
\label{cor:single}
\end{corollary}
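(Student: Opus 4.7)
The plan is to combine the two lemmas already established with the closed-form expressions for the Dirichlet moments, and to observe that everything needed for a single edge $(k,j)$ is either precomputed during two passes of the network or is an $O(1)$ calculation from standard Dirichlet formulas.

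First I would run a bottom-up evaluation of $\mathcal{S}$ in which every edge weight $w_{k,j}$ is replaced by $\alpha_{k,j}/\sum_{j'}\alpha_{k,j'}$. By Lemma~\ref{lemma:1}, the value at the root of this pass equals $Z_{\mathbf{x}} = \sum_{t=1}^{\taus} c_t u_t$, and along the way I obtain the intermediate values $V_j(\mathbf{x};\mathbf{w})$ at every node in $O(|\mathcal{S}|)$ time and space. Next I would run a single top-down back-propagation pass to compute $D_k(\mathbf{x};\mathbf{w}) = \partial V_{\text{root}}(\mathbf{x};\mathbf{w})/\partial V_k(\mathbf{x};\mathbf{w})$ at every node, again in $O(|\mathcal{S}|)$ time and space. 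These two passes together let me read off, for the target edge $(k,j)$, the quantity $\lambda_{k,j} = w_{k,j}V_j(\mathbf{x};\mathbf{w})D_k(\mathbf{x};\mathbf{w})/V_{\text{root}}(\mathbf{x};\mathbf{w})$ in constant time, which by Lemma~\ref{lemma:2} and \eqref{equ:tri} equals $(1/Z_{\mathbf{x}})\sum_{\mathcal{T}_t\in\mathcal{T}_T} c_t u_t$. Its complement $1 - \lambda_{k,j}$ then equals $(1/Z_{\mathbf{x}})\sum_{\mathcal{T}_t\in\mathcal{T}_F} c_t u_t$, supplying both convex-combination coefficients in \eqref{equ:mp}.

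It remains to evaluate the two Dirichlet moments $M_{p_{0,k}}(f_{k,j})$ and $M_{p'_{0,k}}(f_{k,j})$. Since $p_{0,k}$ is the prior Dirichlet with parameters $\alpha_k$ and $p'_{0,k}$ is the same Dirichlet with $\alpha_{k,j}$ incremented by one, both are fully specified by the $\boldalpha$ vector for sum node $k$, and for the moment functions of interest (e.g., $f(w)=w$ or $f(w)=w^2$) these moments have closed-form expressions that can be computed in $O(1)$ time given $\alpha_k$. Substituting into \eqref{equ:final} yields $M_p(f_{k,j})$ for the single edge in constant additional work on top of the two network passes, giving the total $O(|\mathcal{S}|)$ bound claimed.

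There is essentially no hard step left: the corollary is a direct consequence of Lemmas~\ref{lemma:1} and~\ref{lemma:2} plus conjugacy of Dirichlet and multinomial. The only point that needs a sentence of justification is that $\lambda_{k,j}$ as defined via \eqref{equ:tri} really does coincide with $(1/Z_{\mathbf{x}})\sum_{\mathcal{T}_t\in\mathcal{T}_T} c_t u_t$, which follows by multiplying the identity of Lemma~\ref{lemma:2} by $1/Z_{\mathbf{x}}$ and using the chain rule $\partial V_{\text{root}}/\partial w_{k,j} = D_k(\mathbf{x};\mathbf{w})\,\partial V_k(\mathbf{x};\mathbf{w})/\partial w_{k,j} = D_k(\mathbf{x};\mathbf{w})V_j(\mathbf{x};\mathbf{w})$ from \eqref{equ:network-eval}. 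Once this identification is made, the corollary follows immediately.
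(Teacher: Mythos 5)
Your proposal is correct and follows the same route as the paper: the corollary is an immediate consequence of Lemma~\ref{lemma:1} (bottom-up evaluation for $Z_{\mathbf{x}}$) and Lemma~\ref{lemma:2} (the differential trick for $\sum_{\mathcal{T}_t\in\mathcal{T}_T}c_tu_t$), combined with the closed-form Dirichlet moments under the prior. Your write-up is simply a more detailed expansion of the paper's one-line argument, and the extra justification of the identity $\lambda_{k,j} = (1/Z_{\mathbf{x}})\sum_{\mathcal{T}_t\in\mathcal{T}_T}c_tu_t$ via the chain rule is consistent with \eqref{equ:tri}.
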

\begin{wrapfigure}{htb}{0.5\textwidth}
\vspace*{-2em}
\centering
\includegraphics[width=\linewidth]{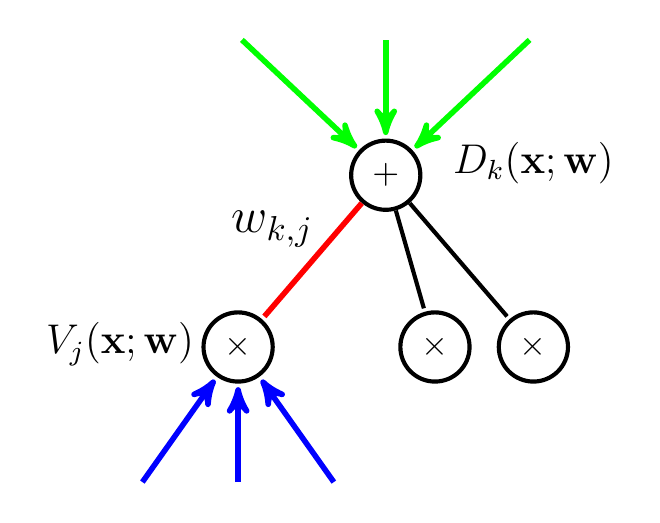}
\caption{The moment computation only needs three quantities: the forward evaluation value at node $j$, the backward differentiation value node $k$, and the weight of edge $(k, j)$.}
\label{fig:net}
\vspace*{-2em}
\end{wrapfigure}

The corollary simply follows from Lemma~\ref{lemma:1} and Lemma~\ref{lemma:2} with the assumption that the moments under the prior has closed form solution. By definition, we also have $\lambda_{k,j} = \sum_{\mathcal{T}_t\in\mathcal{T}_T}c_tu_t / Z_{\mathbf{x}}$, hence $0 \leq \lambda_{k,j}\leq 1$, $\forall (k, j)$. This formula shows that $\lambda_{k,j}$ computes the ratio of all the induced trees that contain edge $(k, j)$ to the network. Roughly speaking, this measures how important the contribution of a specific edge is to the whole network polynomial.  As a result, we can interpret \eqref{equ:final} as follows: the more important the edge is, the more portion of the moment comes from the new observation. We visualize our moment computation method for a single edge $(k, j)$ in Fig.~\ref{fig:net}. 

\textbf{Remark}. CCCP for SPNs was originally derived using a sequential convex relaxation technique, where in each iteration a concave surrogate function is constructed and optimized. The key update in each iteration of CCCP (\cite{zhao2016unified}, (7)) is given as follows: $w'_{k,j}\propto w_{k,j}V_j(\mathbf{x};\mathbf{w})D_k(\mathbf{x};\mathbf{w})/V_{\text{root}}(\mathbf{x};\mathbf{w})$, where the R.H.S.\@ is exactly the same as $\lambda_{k,j}$ defined above. From this perspective, CCCP can also be understood as implicitly applying the differential trick to compute $\lambda_{k, j}$, i.e., the relative importance of edge $(k, j)$, and then take updates according to this importance measure.

In order to compute the moments of all the edge weights $w_{k,j}$, a naive computation would scale $O(|\mathcal{S}|^2)$ because
there are $O(|\mathcal{S}|)$ edges in the graph and from Cor.~\ref{cor:single} each such computation takes $O(|\mathcal{S}|)$ time. The key observation that allows us to further reduce the complexity to linear comes from the structure of $\lambda_{k,j}$: $\lambda_{k,j}$ only depends on three terms, i.e., the forward evaluation value $V_j(\mathbf{x};\mathbf{w})$, the backward differentiation value $D_k(\mathbf{x};\mathbf{w})$ and the original weight of the edge $w_{k,j}$. This implies that we can use dynamic programming to cache both $V_j(\mathbf{x};\mathbf{w})$ and $D_k(\mathbf{x};\mathbf{w})$ in a bottom-up evaluation pass and a top-down differentiation pass, respectively. At a high level, we trade off a constant factor in space complexity (using two additional copies of the network) to reduce the quadratic time complexity to linear. 

\begin{theorem}
For all edges $(k, j)$, \eqref{equ:mp} can be computed in $O(|\mathcal{S}|)$ time and space. 
\end{theorem}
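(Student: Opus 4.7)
The plan is to upgrade Corollary~\ref{cor:single} from a per-edge $O(|\mathcal{S}|)$ bound to an aggregate $O(|\mathcal{S}|)$ bound for all edges simultaneously, by avoiding recomputation of the forward and backward quantities that the single-edge algorithm would otherwise redo from scratch for every edge.

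First I would observe that \eqref{equ:final} reduces the problem to computing $\lambda_{k,j} = w_{k,j} V_j(\mathbf{x};\mathbf{w}) D_k(\mathbf{x};\mathbf{w}) / V_{\text{root}}(\mathbf{x};\mathbf{w})$ for every edge $(k,j)$, where the edge weights are set to $w_{k,j} = \alpha_{k,j}/\sum_{j'}\alpha_{k,j'}$ as in Lemma~\ref{lemma:1}; the prior moments $M_{p_0}(f_{k,j})$ and $M_{p'_0}(f_{k,j})$ are available in closed form for Dirichlet priors, each in $O(1)$. So the task collapses to computing all the triples $(V_j, D_k, w_{k,j})$ and then combining them by \eqref{equ:final}.

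Next I would carry out the dynamic program in two passes over the network. In the bottom-up pass, I evaluate $V_k(\mathbf{x};\mathbf{w})$ at every node according to \eqref{eq:network-eval}, caching each value; this takes $O(|\mathcal{S}|)$ time and $O(|\mathcal{S}|)$ space since each node and each edge is touched once (this is exactly the statement of Lemma~\ref{lemma:1}, and also yields $V_{\text{root}}(\mathbf{x};\mathbf{w}) = Z_{\mathbf{x}}$). In the top-down pass, I compute $D_k(\mathbf{x};\mathbf{w}) = \partial V_{\text{root}}(\mathbf{x};\mathbf{w})/\partial V_k(\mathbf{x};\mathbf{w})$ at every node by standard reverse-mode differentiation: initializing $D_{\text{root}} = 1$ and propagating through sum nodes (multiplying by the appropriate edge weight) and product nodes (multiplying by the product of sibling values, which was already cached in the forward pass); this also takes $O(|\mathcal{S}|)$ time and space, as used in Lemma~\ref{lemma:2}.

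Finally I would iterate once over the $O(|\mathcal{S}|)$ edges and, using the cached $V_j$ and $D_k$ together with $w_{k,j}$ and $V_{\text{root}}$, compute $\lambda_{k,j}$ in $O(1)$ per edge and then $M_p(f_{k,j})$ by \eqref{equ:final} in $O(1)$ per edge. Summing the three phases gives total cost $O(|\mathcal{S}|)$ time and $O(|\mathcal{S}|)$ space, as claimed. There is no real obstacle here beyond the bookkeeping observation that $\lambda_{k,j}$ depends only on locally cached quantities at the endpoints of the edge; all the nontrivial content (the reduction to a joint inference problem, and the differentiation trick that avoids enumerating the $\Omega(2^{H(\mathcal{S})})$ induced trees) has already been established in Lemmas~\ref{lemma:1} and~\ref{lemma:2}, so the theorem follows from a constant-factor space-for-time tradeoff realized by two passes over $\mathcal{S}$.
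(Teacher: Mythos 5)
Your proposal is correct and follows essentially the same route as the paper's proof: cache all forward values $V_j(\mathbf{x};\mathbf{w})$ in one bottom-up pass, cache all backward derivatives $D_k(\mathbf{x};\mathbf{w})$ in one top-down pass, and then assemble each $\lambda_{k,j}$ and $M_p(f_{k,j})$ in constant time per edge via \eqref{equ:final}. The only difference is that you spell out the reverse-mode propagation rules at sum and product nodes a bit more explicitly than the paper does, which the paper leaves implicit by appeal to the chain rule.
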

\begin{proof}
During the bottom-up evaluation pass, in order to compute the value $V_{\text{root}}(\mathbf{x};\mathbf{w})$ at the root of $\mathcal{S}$, we will also obtain all the values $V_j(\mathbf{x};\mathbf{w})$ at each node $j$ in the graph. So instead of discarding these intermediate $V_j(\mathbf{x};\mathbf{w})$, we cache them by allocating additional space at each node $j$. So after one bottom-up evaluation pass of the network, we will also have all the $V_j(\mathbf{x};\mathbf{w})$ for each node $j$, at the cost of one additional copy of the network. Similarly, during the top-down differentiation pass of the network, because of the chain rule, we will also obtain all the intermediate $D_k(\mathbf{x};\mathbf{w})$ at each node $k$. Again, we cache them. Once we have both $V_j(\mathbf{x};\mathbf{w})$ and $D_k(\mathbf{x};\mathbf{w})$ for each edge $(k,j)$, from \eqref{equ:final}, we can get all the moments for all the weighted edges in $\mathcal{S}$ simultaneously. Because the whole process only requires one bottom-up evaluation pass and one top-down differentiation pass of $\mathcal{S}$, the time complexity is $2|\mathcal{S}|$. Since we use two additional copies of $\mathcal{S}$, the space complexity is $3|\mathcal{S}|$.
\end{proof}
We summarize the linear time algorithm for moment computation in Alg.~\ref{alg:moment}.
\begin{algorithm}[htb]
\centering
\caption{Linear Time Exact Moment Computation}
\label{alg:moment}
\begin{algorithmic}[1]
\REQUIRE	Prior $p_0(\mathbf{w}\mid\boldalpha)$, moment $f$, SPN $\mathcal{S}$ and input $\mathbf{x}$.
\ENSURE	$M_p(f(w_{k,j})), \forall (k,j)$.
\STATE	$w_{k,j}\leftarrow \alpha_{k,j}/\sum_{j'} \alpha_{k,j'}, \forall (k,j)$.
\STATE	Compute $M_{p_0}(f(w_{k,j}))$ and $M_{p_0'}(f(w_{k,j})), \forall (k,j)$.
\STATE	Bottom-up evaluation pass of $\mathcal{S}$ with input $\mathbf{x}$. Record $V_k(\mathbf{x};\mathbf{w})$ at each node $k$.
\STATE	Top-down differentiation pass of $\mathcal{S}$ with input $\mathbf{x}$. Record $D_k(\mathbf{x};\mathbf{w})$ at each node $k$.
\STATE	Compute the exact moment for each $(k,j)$: $M_p(f_{k,j}) = \left(1 - \lambda_{k,j}\right)M_{p_0}(f_{k,j}) + \lambda_{k,j}M_{p'_0}(f_{k,j})$.
\end{algorithmic}
\end{algorithm}

\section{Applications in Online Moment Matching}
In this section we use Alg.~\ref{alg:moment} as a sub-routine to develop a new Bayesian online learning algorithm for SPNs based on assumed density filtering~\citep{sorenson1968non}. To do so, we find an approximate distribution by minimizing the KL divergence between the one-step update posterior and the approximate distribution. Let $\mathcal{P} = \{q\mid q = \prod_{k=1}^m \text{Dir}(w_k;\beta_k)\}$, i.e., $\mathcal{P}$ is the space of product of Dirichlet densities that are decomposable over all the sum nodes in $\mathcal{S}$.  Note that since $p_0(\mathbf{w};\boldalpha)$ is fully decomposable, we have $p_0\in\mathcal{P}$. One natural choice is to try to find an approximate distribution $q\in\mathcal{P}$ such that $q$ minimizes the KL-divergence between $p(\mathbf{w}|\mathbf{x})$ and $q$, i.e., 
$$
\hat{p} = \argmin_{q\in\mathcal{P}} \KL(p(\mathbf{w}\mid\mathbf{x})~||~q)
$$
It is not hard to show that when $q$ is an exponential family distribution, which is the case in our setting, the minimization problem corresponds to solving the following moment matching equation:
\begin{equation}
\label{equ:mq}
\EE_{p(\mathbf{w}\mid\mathbf{x})}[T(w_k)] = \EE_{q(\mathbf{w})}[T(w_k)]
\end{equation}
where $T(w_k)$ is the vector of sufficient statistics of $q(w_k)$. When $q(\cdot)$ is a Dirichlet, we have $T(w_k) = \log w_k$, where the $\log$ is understood to be taken elementwise. This principle of finding an approximate distribution is also known as \emph{reverse information projection} in the literature of information theory~\citep{csiszar2003information}. As a comparison, information projection corresponds to minimizing $\KL(q~||~p(\mathbf{w}\mid\mathbf{x}))$ within the same family of distributions $q\in\mathcal{P}$. By utilizing our efficient linear time algorithm for exact moment computation, we propose a Bayesian online learning algorithm for SPNs based on the above moment matching principle, called assumed density filtering (ADF). The pseudocode is shown in Alg.~\ref{alg:adf}.

In the ADF algorithm, for each edge $w_{k,j}$ the above moment matching equation amounts to solving the following equation:
$$\psi(\beta_{k,j}) - \psi(\sum_{j'}\beta_{k,j'}) = \EE_{p(\mathbf{w}\mid\mathbf{x})}[\log w_{k,j}]$$
where $\psi(\cdot)$ is the digamma function. This is a system of nonlinear equations about $\beta$ where the R.H.S. of the above equation can be computed using Alg.~\ref{alg:moment} in $O(|\mathcal{S}|)$ time for all the edges $(k,j)$. To efficiently solve it, we take $\exp(\cdot)$ at both sides of the equation and approximate the L.H.S. using the fact that $\exp(\psi(\beta_{k,j}))\approx \beta_{k,j} - \frac{1}{2}$ for $\beta_{k,j} > 1$. Expanding the R.H.S. of the above equation using the identity from \eqref{equ:final}, we have:
\begin{align}
 & \exp\left(\psi(\beta_{k,j}) - \psi(\sum_{j'}\beta_{w,j'})\right) = \exp\left(\EE_{p(\mathbf{w}\mid\mathbf{x})}[\log w_{k,j}]\right) \nonumber\\
\Leftrightarrow & \frac{\beta_{k,j} - \frac{1}{2}}{\sum_{j'}\beta_{k,j'} - \frac{1}{2}} = \left(\frac{\alpha_{k,j} - \frac{1}{2}}{\sum_{j'}\alpha_{k,j'} - \frac{1}{2}}\right)^{(1 - \lambda_{k,j})} \times \left(\frac{\alpha_{k,j} + \frac{1}{2}}{\sum_{j'}\alpha_{k,j'} + \frac{1}{2}}\right)^{\lambda_{k,j}}
\label{equ:geomean}
\end{align}
Note that $(\alpha_{k,j} - 0.5)/(\sum_{j'}\alpha_{k,j'} - 0.5)$ is approximately the mean of the prior Dirichlet under $p_0$ and $(\alpha_{k,j} + 0.5)/(\sum_{j'}\alpha_{k,j'} + 0.5)$ is approximately the mean of $p'_0$, where $p'_0$ is the posterior by adding one pseudo-count to $w_{k,j}$. So \eqref{equ:geomean} is essentially finding a posterior with hyperparameter $\beta$ such that the posterior mean is approximately the weighted geometric mean of the means given by $p_0$ and $p'_0$, weighted by $\lambda_{k,j}$.

Instead of matching the moments given by the sufficient statistics, also known as the natural moments, BMM tries to find an approximate distribution $q$ by matching the first order moments, i.e., the mean of the prior and the one-step update posterior. Using the same notation, we want $q$ to match the following equation:
\begin{equation}
\EE_{q(\mathbf{w})}[w_k] = \EE_{p(\mathbf{w}|\mathbf{x})}[w_k] \quad
\Leftrightarrow \quad \frac{\beta_{k,j}}{\sum_{j'}\beta_{k,j'}} = (1 - \lambda_{k,j})\frac{\alpha_{k,j}}{\sum_{j'}\alpha_{k,j'}} + \lambda_{k,j}\frac{\alpha_{k,j} + 1}{\sum_{j'}\alpha_{k,j'} + 1}
\label{equ:arithmean}
\end{equation}
Again, we can interpret the above equation as to find the posterior hyperparameter $\beta$ such that the posterior mean is given by the weighted arithmetic mean of the means given by $p_0$ and $p'_0$, weighted by $\lambda_{k,j}$. Notice that due to the normalization constraint, we cannot solve for $\beta$ directly from the above equations, and in order to solve for $\beta$ we will need one more equation to be added into the system. However, from line 1 of Alg.~\ref{alg:moment}, what we need in the next iteration of the algorithm is not $\beta$, but only its normalized version. So we can get rid of the additional equation and use \eqref{equ:arithmean} as the update formula directly in our algorithm. 

Using Alg.~\ref{alg:moment} as a sub-routine, both ADF and BMM enjoy linear running time, sharing the same order of time complexity as CCCP. However, since CCCP directly optimizes over the data log-likelihood, in practice we observe that CCCP often outperforms ADF and BMM in log-likelihood scores. 

\begin{algorithm}[htb]
\centering
\caption{Assumed Density Filtering for SPN}
\label{alg:adf}
\begin{algorithmic}[1]
\REQUIRE	Prior $p_0(\mathbf{w}\mid\boldalpha)$, SPN $\mathcal{S}$ and input $\{\mathbf{x}_i\}_{i=1}^\infty$.
\STATE	$p(\mathbf{w})\leftarrow p_0(\mathbf{w}\mid\boldalpha)$
\FOR {$i = 1, \ldots, \infty$}
	\STATE	Apply Alg.~\ref{alg:moment} to compute $\EE_{p(\mathbf{w}\mid\mathbf{x}_i)}[\log w_{k,j}]$ for all edges $(k, j)$.
	\STATE	Find $\hat{p} = \argmin_{q\in\mathcal{P}} \KL(p(\mathbf{w}\mid\mathbf{x}_i)~||~q)$ by solving the moment matching equation \eqref{equ:mq}.
	\STATE	$p(\mathbf{w})\leftarrow\hat{p}(\mathbf{w})$.
\ENDFOR
\end{algorithmic}
\end{algorithm}

\section{Conclusion}
We propose an optimal linear time algorithm to efficiently compute the moments of model parameters in SPNs under online settings. The key techniques used in the design of our algorithm include the liner time reduction from moment computation to joint inference, the differential trick that is able to efficiently evaluate a multilinear function, and the dynamic programming to further reduce redundant computations. Using the proposed algorithm as a sub-routine, we are able to improve the time complexity of BMM from quadratic to linear on general SPNs with DAG structures. We also use the proposed algorithm as a sub-routine to design a new online algorithm, ADF. As a future direction, we hope to apply the proposed moment computation algorithm in the design of efficient structure learning algorithms for SPNs. We also expect that the analysis techniques we develop might find other uses for learning SPNs.

\section*{Acknowledgements}
HZ thanks Pascal Poupart for providing insightful comments. HZ and GG are supported in part by ONR award N000141512365.
\bibliography{references}
\bibliographystyle{abbrvnat}

\end{document}